\newtheorem*{rep@theorem}{\rep@title}
\newcommand{\newreptheorem}[2]{%
\newenvironment{rep#1}[1]{%
 \def\rep@title{#2 \ref{##1}}%
 \begin{rep@theorem}}%
 {\end{rep@theorem}}}
\newtheorem{theorem}{Theorem}
\newcommand{\argmax}{\mathrm{argmax}}
\newcommand{\argmin}{\mathrm{argmin}}
\newcommand{\alglinelabel}{%
  \addtocounter{ALC@line}{-1}
  \refstepcounter{ALC@line}
  \label
}
\icmltitlerunning{Q-Value Weighted Regression}
\begin{document}

\twocolumn[
\icmltitle{Q-Value Weighted Regression: Reinforcement Learning with Limited Data}



\icmlsetsymbol{equal}{*}

\begin{icmlauthorlist}
\icmlauthor{Piotr Kozakowski}{equal,goo,mim}
\icmlauthor{Łukasz Kaiser}{equal,brain}
\icmlauthor{Henryk Michalewski}{goo,mim}
\icmlauthor{Afroz Mohiuddin}{brain}
\icmlauthor{Katarzyna Kańska}{goo}
\end{icmlauthorlist}

\icmlaffiliation{mim}{Faculty of Mathematics, Informatics and Mechanics, University of Warsaw, Poland}
\icmlaffiliation{goo}{Google}
\icmlaffiliation{brain}{Google Brain}

\icmlcorrespondingauthor{Piotr Kozakowski}{p.kozakowski@mimuw.edu.pl}

\icmlkeywords{Machine Learning, ICML}

\vskip 0.3in
]



\printAffiliationsAndNotice{\icmlEqualContribution} 

\begin{abstract}
Sample efficiency and performance in the offline setting have emerged as significant
challenges of deep reinforcement learning. We introduce Q-Value Weighted Regression (QWR),
a simple RL algorithm that excels in these aspects.
QWR is an extension of Advantage Weighted Regression (AWR), an off-policy actor-critic algorithm
that performs very well on continuous control tasks, also in the offline setting, but has low
sample efficiency and struggles with high-dimensional observation spaces. We perform
an analysis of AWR that explains its shortcomings and use these insights to motivate QWR.
%
We show experimentally that QWR matches the state-of-the-art algorithms both on tasks with
continuous and discrete actions.
In particular, QWR yields results on par with SAC on the MuJoCo suite and -- with
the same set of hyperparameters -- yields results on par with a highly tuned Rainbow
implementation on a set of Atari games. We also verify that QWR performs well in the
offline RL setting.


\end{abstract}

\section{Introduction}


Deep reinforcement learning has been applied to a large number of challenging tasks,
from games \citep{silver2017mastering,OpenAI_dota,vinyals2017starcraft} to robotic control \citep{sadeghi2016cad2rl,openai2018dexterous,rusu2016sim2real}. Since RL makes minimal assumptions
on the underlying task, it holds the promise of automating a wide range of applications.
However, its widespread adoption has been hampered by a number of challenges.
Reinforcement learning algorithms can be substantially more complex to implement
and tune than standard supervised learning methods and can have a fair number of hyper-parameters
and be brittle with respect to their choices, and may require a large number of interactions with
the environment.

These issues are well-known and there has been significant progress in addressing them. 
The policy gradient algorithm
REINFORCE \citep{reinforce1} is simple to understand and implement, but is both brittle and requires on-policy
data. Proximal Policy Optimization (PPO, \citet{schulman2017proximal}) is a more stable on-policy algorithm that has
seen a number of successful applications despite requiring a large number of interactions
with the environment. Soft Actor-Critic (SAC, \citet{haarnoja2018soft}) is a much more sample-efficient off-policy algorithm, but it is defined only for continuous action spaces and 
does not work well in the offline setting, known as batch reinforcement learning, where all samples
are provided from earlier interactions with the environment, and the agent cannot collect more samples.
Advantage Weighted Regression
(AWR, \citet{peng2019advantageweighted})
is a recent off-policy actor-critic algorithm that works well in the offline setting
and is built using only simple and convergent maximum likelihood loss functions, making it
easier to tune and debug. It is competitive with SAC given enough time to train,
but is less sample-efficient and has not been demonstrated to succeed in settings with discrete actions.

We replace the value function critic of AWR with a Q-value function.
Next, we add action sampling to the actor training loop. Finally, we introduce a custom backup to the Q-value training.
The resulting algorithm, which we call
Q-Value Weighted Regression (QWR) inherits the advantages of AWR but is more sample-efficient
and works well with discrete actions and in visual domains, e.g., on Atari games.

To better understand QWR we perform a number of ablations, checking different number of samples in actor training, different advantage estimators, and aggregation functions. These choices affect
the performance of QWR only to a limited extent and it remains stable with each of the choices across the tasks we experiment with.

We run experiments with QWR on the MuJoCo environments and on a subset of the Arcade Learning
Environment. Since sample efficiency is our main concern, we focus on the difficult case when
the number of interactions with the environment is limited -- in most our experiments we
limit it to 100K interactions. The experiments demonstrate that QWR is indeed more
sample-efficient than AWR. On MuJoCo, it performs on par with Soft Actor-Critic (SAC),
the current state-of-the-art algorithm for continuous domains.
On Atari, QWR performs on par with OTRainbow, a variant of Rainbow highly tuned for
sample efficiency. Notably, we use the same set of hyperparameters (except for the network
architecture) for both MuJoCo and Atari experiments.
We verify that QWR performs well also in the regime where more data is
available: with 1M interactions, QWR still out-perform SAC on MuJoCo
on all environments we tested except for HalfCheetah.


\section{Q-Value Weighted Regression}
\label{sec:qwr}

\subsection{Advantage Weighted Regression}

\citet{peng2019advantageweighted} recently proposed Advantage Weighted Regression (AWR), an off-policy, actor-critic algorithm notable for its simplicity and stability, achieving competitive results across a range of continuous control tasks. It can be expressed as interleaving data collection and two regression tasks performed on the replay buffer, as shown in \autoref{algo:awr}.


\begin{algorithm}[H]
\begin{algorithmic}[1]
\STATE $\theta \gets$ random actor parameters
\STATE $\phi \gets$ random critic parameters
\STATE $\mathcal{D} \gets \emptyset$
\FOR{$k$ \textbf{in} $0..n\_iterations-1$}
\STATE add trajectories $\{\tau_i\}$ sampled by $\pi_\theta$ to $\mathcal{D}$

\FOR{$i$ \textbf{in} $0..n\_critic\_steps-1$}
\STATE sample $(\mathbf{s, a}) \sim \mathcal{D}$
\STATE $\phi \gets \phi - \alpha_V \nabla_\phi ||\mathcal{R}_\mathcal{D}^{\mathbf{s, a}} - V_\phi(\mathbf{s}) ||^2$
\ENDFOR
\FOR{$i$ \textbf{in} $0..n\_actor\_steps-1$}
\STATE sample $(\mathbf{s, a}) \sim \mathcal{D}$
\STATE $\xi \gets \exp(\frac{1}{\beta} (\mathcal{R}_\mathcal{D}^{\mathbf{s, a}} - V_\phi(\mathbf{s}))$
\STATE $\theta \gets \theta + \alpha_\pi \nabla_\theta \log \pi_\theta(\mathbf{a | s})~\xi$
\ENDFOR
\ENDFOR
\end{algorithmic}
\caption{Advantage Weighted Regression.}
\label{algo:awr}
\end{algorithm}

AWR optimizes \textit{expected improvement} of an actor policy $\pi(\mathbf{a} | \mathbf{s})$ over a sampling policy $\mu(\mathbf{a} | \mathbf{s})$ by regression towards the well-performing actions in the collected experience. Improvement is achieved by weighting the actor loss by exponentiated advantage $A_\mu(\mathbf{s, a})$ of an action, skewing the regression towards the better-performing actions. 
The advantage is calculated based on the expected return $\mathcal{R}_\mu^{\mathbf{s, a}}$ achieved by performing action $\mathbf{a}$ in state $\mathbf{s}$ and then following the sampling policy $\mu$. To calculate the advantage, one first estimates the value,  $V_\mu(s)$, using a learned critic and then computes $A_\mu(\mathbf{s, a}) = \mathcal{R}_\mu^{\mathbf{s, a}} - V_\mu(\mathbf{s})$.
This results in the following formula for the actor:
\begin{align} \label{eq:awr_actor_loss}
\begin{split}
\argmax_\pi \mathbb{E}_{\mathbf{s} \sim d_\mu} \mathbb{E}_{\mathbf{a} \sim \mu(\cdot | \mathbf{s})} \log \pi(\mathbf{a} | \mathbf{s})~\xi^{\mathbf{s, a}}_\mu, & \\
\textrm{where}~\xi^{\mathbf{s, a}}_\mu = \exp \left( \frac{1}{\beta}(\mathcal{R}_\mu^{\mathbf{s, a}} - V_\mu(\mathbf{s})) \right). &
\end{split}
\end{align}
%
%
In this formula $d_\mu(\mathbf{s}) = \sum_{t=1}^{\infty}\gamma^{t - 1} p(\mathbf{s}_t = \mathbf{s}|\mu)$ denotes the unnormalized, discounted state visitation distribution of the policy $\mu$, and $\beta$ is a temperature hyperparameter.

The critic is trained to estimate the future returns of the sampling policy $\mu$:
\begin{equation}
\argmin_V \mathbb{E}_{\mathbf{s} \sim d_\mu(\mathbf{s})} \mathbb{E}_{\mathbf{a} \sim \mu(\cdot | \mathbf{s})} \left[ || \mathcal{R}_\mu^{\mathbf{s, a}} - V(\mathbf{s}) ||^2 \right].
\label{eq:awr_critic_loss}
\end{equation}
To achieve off-policy learning, the actor and the critic are trained on data collected from a mixture of policies from different training iterations, stored in the replay buffer $\mathcal{D}$. 

\subsection{Analysis of AWR with Limited Data}

While AWR achieves very good results after longer training, it is not very sample efficient,
as noted in the future work section of \citet{peng2019advantageweighted}. To understand this
problem, we analyze a single loop of actor training in AWR under a special assumption.

The assumption we introduce, called \emph{state-determines-action}, concerns the content of the
replay buffer $\mathcal{D}$ of an off-policy RL algorithm. The replay buffer contains the state-action pairs
that the algorithm has visited so far during its interactions with the environment. We say that
a replay buffer $\mathcal{D}$ satisfies the \emph{state-determines-action} assumption when for each
state $s$ in the buffer, there is a unique action that was taken from it, formally:
\[ \text{for all } (s, a), (s', a') \in \mathcal{D}: s = s' \implies a = a'. \]

This assumption may seem limiting and indeed -- it is not true in many of the artificial experiments with RL algorithms, with discrete state and action spaces. In such settings, even a random policy starting from the same state could violate the assumption the second time it collects a trajectory. But note that state-determines-action is almost always satisfied in continuous control, where even a slightly random policy is unlikely to ever perform the exact same action twice and transition to \emph{exactly} the same state.

Note that our assumption applies well to real-world experiments with high-dimensional state spaces, as any amount of noise added to a high-dimensional space will make repeating the exact same state highly improbable. For example, consider a robot observing 32x32 pixel images. To repeat an observation, each of the 1024 pixels would have to have exactly the same value, which is close to impossible, even with a small amount of pixel noise coming from a camera. This assumption also holds in cases with limited data, even in discrete state and action spaces. When the number of collected trajectories is not enough to span the state space, it is unlikely a state will be repeated in the replay buffer. This makes our assumption particularly relevant to the study of sample efficiency.

We emphasize that the state-determines-action assumption, by design, considers \textit{exact equality} of states. Two very similar, but not equal states that lead to different actions do not violate our assumption. This makes it irrelevant to reinforcement learning with linear functions
as linear functions cannot separate similar states.
However, it is relevant in deep RL because deep neural networks can indeed distinguish even very similar inputs  \citep{adv-examples,margins,understanding-generalization}. 


How does AWR perform under the state-determines-action assumption? In Theorems~\ref{thm:awr_disc} and \ref{thm:awr_cont} (see Appendix~\ref{sec:limdata} for more details), we show that for popular choices of discrete and Gaussian distributions the AWR update rule
under this assumption will converge to a policy that assigns probability 1 to the actions
already present in the replay buffer, thus cloning the previous behaviors. This is not the desired
behavior, as an agent should consider various actions from each state, to ensure exploration.
\begin{reptheorem}{thm:awr_disc}
Let $\mathcal{A}$ be a discrete action space. Let a replay buffer $\mathcal{D} \subseteq \mathcal{S} \times \mathcal{A}$ satisfy the state-determines-action assumption. Let $\pi_\mathcal{D}$ be the probability function of a distribution that clones the behavior from $\mathcal{D}$, i.e., that assigns to each state $s$ from $\mathcal{D}$ the action $a$ such that
$(s, a) \in \mathcal{D}$ with probability $1$.  Then, under the AWR update,
$\pi^{i + 1}_\mathcal{D} \gets \pi_\mathcal{D}$.
\end{reptheorem}


The state-determines-action assumption is the main motivating point behind QWR, whose theoretical properties are proven in Theorem~\ref{thm:qwr} in Appendix~\ref{sec:limdata-qwr}. We now illustrate the importance of this assumption by creating a simple environment in which it holds with high probability. We verify experimentally that AWR fails on this simple environment, while QWR is capable of solving it.

\begin{figure}[t]
  \centering
  \includegraphics[width=1\linewidth]{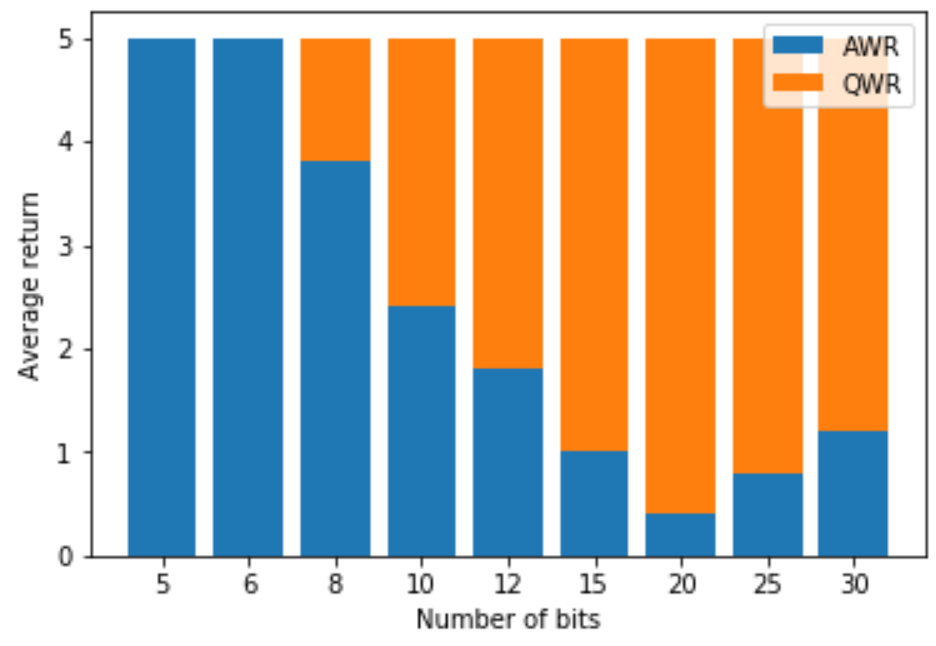}
  \caption{AWR and QWR on the BitFlip environment. The maximum possible return is 5.}
  \label{fig:bitflip}
\end{figure}

The environment, which we call \emph{BitFlip}, is parameterized by an integer $N$. The state
of the environment consists of $N$ bits and a step counter. The action space consists of $N$ actions.
When an action $i$ is chosen, the $i$-th bit is flipped and the step counter is incremented. A game of BitFlip starts in a random state with the step counter set to 0,
and proceeds for 5 steps. The initial state is randomized in such a way to always leave at least 5 bits set to 0. At each step, the reward is $+1$ if a bit was flipped
from $0$ to $1$ and the reward is $-1$ in the opposite case.

Since BitFlip starts in one random state out of $2^N$, at large enough $N$ it is highly unlikely that the starting
state will ever be repeated in the replay buffer. As the initial policy is random and BitFlip
maintains a step counter to prevent returning to a state, the same holds for subsequent states.

BitFlip is a simple game with a very simple strategy, but the initial replay buffer will
satisfy the state-determines-action assumption with high probability. As we will see, this is enough to break AWR. We ran both AWR and QWR on BitFlip for different values of $N$, for 10 iterations per experiment. In each iteration we collected 1000 interactions with the environment and trained both the actor and the critic for 300 steps. All shared hyperparameters of AWR and QWR were set to the same values, and the backup operator in QWR was set to mean. We report the mean out of 10 episodes played by the trained agent. The results are shown in \autoref{fig:bitflip}.

As we can see, the performance of AWR starts deteriorating at a relatively small value of $N~=~8$, which corresponds to a state space with $5 \cdot 2^{8} = 1280$ states, while QWR maintains high performance even at $N=30$, so around $5 \cdot 10^9$ states. Notice how the returns of AWR drop with $N$ -- at higher values: $20-30$, the agent struggles to flip even a single zero bit. This problem with AWR and large state spaces motivates us to introduce QWR next.


\subsection{Q-Value Weighted Regression}
\label{qwr:intro}

\begin{algorithm}[h]
\begin{algorithmic}[1]
\STATE $\theta \gets$ random actor parameters
\STATE $\phi \gets$ random critic parameters
\STATE $\mathcal{D} \gets \emptyset$
\FOR{$k$ \textbf{in} $0..n\_iterations-1$}
\STATE add trajectories $\{\tau_i\}$ sampled by $\pi_\theta$ to $\mathcal{D}$

\STATE $\phi_t \gets \phi$
\FOR{$i$ \textbf{in} $0..n\_critic\_steps-1$}
\IF{$i\mod update\_frequency = 0$}
\STATE $\phi_t \gets \phi$
\ENDIF
\STATE sample $(\mathbf{s}, \mathbf{\mu}, \mathbf{a, r, s'}) \sim \mathcal{D}$
\STATE sample $\mathbf{a'}_0, ..., \mathbf{a'}_{n-1} \sim \mathbf{\mu}(\cdot | \mathbf{s'})$ \alglinelabel{ln:sampling-critic}
\STATE $Q^\star \gets \mathbf{r} + \gamma F(\{Q_{\phi_t}(\mathbf{s', a'}_j)~|~j \in \{0, ..., n - 1\}\})$\alglinelabel{ln:backup}
\STATE $\phi \gets \phi - \alpha_V \nabla_\phi || Q^\star - Q_\phi(\mathbf{s, a}) ||^2$\alglinelabel{ln:q-learning}
\ENDFOR
\FOR{$i$ \textbf{in} $0..n\_actor\_steps-1$}\alglinelabel{ln:actor-begin}
\STATE sample $(\mathbf{s}, \mu, ...) \sim \mathcal{D}$
\STATE sample $\mathbf{a}_0, ..., \mathbf{a}_{n-1} \sim \mu(\cdot | \mathbf{s})$\alglinelabel{ln:sampling-actor}
\STATE $\hat{V} \gets \frac{1}{n} \sum_{j = 0}^{n-1} Q_\phi(\mathbf{s, a}_j)$
\STATE $\xi \gets \exp(\frac{1}{\beta} (Q_\phi(\mathbf{s, a}_j) - \hat{V}))$
\STATE $\theta \gets \theta + \alpha_\pi \nabla_\theta \frac{1}{n} \sum_{j = 0}^{n-1} \log \pi_\theta(\mathbf{a}_j | \mathbf{s})~\xi$\alglinelabel{ln:actor}
\ENDFOR\alglinelabel{ln:actor-end}
\ENDFOR
\end{algorithmic}

\caption{Q-Value Weighted Regression.}
\label{algo:qwr_gradient}
\end{algorithm}

To remedy the issue indicated by Theorems~\ref{thm:awr_disc} and \ref{thm:awr_cont}, we introduce a mechanism to consider multiple different actions that can be taken from a single state. We calculate the advantage of the sampling policy $\mu$ based on a learned Q-function: $A_\mu(\mathbf{s, a}) = Q_\mu(\mathbf{s, a}) - \hat{V}_\mu(\mathbf{s})$, where $\hat{V}_\mu(\mathbf{s})$ is the expected return of the policy $\mu$, expressed using $Q_\mu$ by expectation over actions: $\hat{V}_\mu(\mathbf{s})~=~\mathbb{E}_{a \sim \mu(\cdot | \mathbf{s})} Q_\mu(\mathbf{s, a})$. We substitute our advantage estimator into the AWR actor formula (\autoref{eq:awr_actor_loss}) to obtain the QWR actor:
\begin{align}
\begin{split}
\argmax_\pi \mathbb{E}_{s \sim d_\mu(\mathbf{s})} \mathbb{E}_{\mathbf{a} \sim \mu(\cdot | \mathbf{s})} \log \pi(\mathbf{a} | \mathbf{s})~\xi_\mu^\mathbf{s,a}, & \\
\textrm{where}~\xi_\mu^\mathbf{s,a} = \exp\left(\frac{1}{\beta} (Q_\mu(\mathbf{s, a}) - \hat{V}_\mu(\mathbf{s}))\right). &
\label{eq:qwr_actor_loss}
\end{split}
\end{align}


Similar to AWR, we implement the expectation over states in \autoref{eq:qwr_actor_loss} by sampling from the replay buffer. However, to estimate the expectation over actions, we average over multiple actions sampled from $\mu$ during training. Because the replay buffer contains data from multiple different sampling policies, we store the parameters of the sampling policy $\mu(\mathbf{a} | \mathbf{s})$  conditioned on the current state in the replay buffer and restore it in each training step to compute the loss. This allows us to consider multiple different possible actions for a single state when training the actor, not only the one performed in the collected experience.

The use of a Q-network as a critic provides us with an additional benefit. Instead of regressing it towards the returns of our sampling policy $\mu$, we can train it to estimate the returns of an improved policy $\mu^\star$, in a manner similar to Q-learning. This allows us to optimize expected improvement over $\mu^\star$, providing a better baseline - as long as  $\mathbb{E}_{\mathbf{a} \sim \mu^\star(\cdot | \mathbf{s})} Q_\mu(\mathbf{s}, \mathbf{a}) \geq \mathbb{E}_{\mathbf{a} \sim \mu(\cdot | \mathbf{s})} Q_\mu(\mathbf{s, a})$, the \textit{policy improvement theorem} for stochastic policies \citep[Section 4.2]{sutton_barto} implies that the policy $\mu^\star$ achieves higher returns than the sampling policy $\mu$:
%
\begin{equation}
\mathbb{E}_{\mathbf{a} \sim \mu^\star(\cdot | \mathbf{s})} Q_\mu(\mathbf{s, a}) \geq V_\mu(\mathbf{s}) \Rightarrow V_{\mu^\star}(\mathbf{s}) \geq V_\mu(\mathbf{s})
\end{equation}

$\mu^\star$ need not be parametric - in fact, it is not materialized in any way over the course of the algorithm. The only requirement is that we can estimate the Q backup $\mathbb{E}_{\mathbf{a} \sim \mu^\star(\cdot | \mathbf{s})} Q(\mathbf{s, a})$. This allows great flexibility in choosing the form of $\mu^\star$. Since we want our method to work also in continuous action spaces, we cannot compute the backup exactly. Instead, we estimate it based on several samples from the sampling policy $\mu$. Our backup has the form $\mathbb{E}_{\mathbf{a}_1, ..., \mathbf{a}_k \sim \mu(\cdot | \mathbf{s})} F(\{ Q(\mathbf{s, a}_1), ..., Q(\mathbf{s, a}_k) \})$. In this work, we extend the term \textit{Q-learning} to mean training a Q-value using such a generalized backup. To make training of the Q-network more efficient, we use multi-step targets, described in detail in Appendix~\ref{sec:multistep}. The critic optimization objective using single-step targets is:
\begin{align} \label{eq:qwr_critic_loss}
\begin{split}
\argmin_Q~&\mathbb{E}_{\mathbf{s} \sim d_\mu(\mathbf{s})} \mathbb{E}_{\mathbf{a} \sim \mu(\mathbf{a} | \mathbf{s})} \mathbb{E}_{\mathbf{s'} \sim \mathcal{T}(\cdot | \mathbf{s, a})} \\ 
&\mathbb{E}_{\mathbf{a'}_1, ..., \mathbf{a'}_k \sim \mu(\cdot | \mathbf{s'})} || Q^\star - Q(\mathbf{s, a}) ||^2,
\end{split}
\end{align}
where
\begin{align*}
Q^\star = r(\mathbf{s, a}) + \gamma F(\{Q_\mu(\mathbf{s', a'}_1), ..., Q_\mu(\mathbf{s', a'}_k)\}) \\
\end{align*}
and $\mathcal{T}(\mathbf{s'} | \mathbf{s, a})$ is the environment's transition distribution.

In this work, we investigate three choices of $F$: average, yielding $\mu^\star = \mu$; max, where $\mu^\star$ approximates the greedy policy; and log-sum-exp, $F(X) = \tau \log \left[ \frac{1}{|X|} \sum_{x \in X} \exp(x / \tau) \right]$, interpolating between average and max with the temperature parameter $\tau$. This leads to three versions of the QWR algorithm:  QWR-AVG, QWR-MAX, and QWR-LSE.  The last operator, log-sum-exp, is similar to the backup operator used in maximum-entropy reinforcement learning (see e.g. \citet{haarnoja2018soft}) and can be thought of as a soft-greedy backup, rewarding both high returns and uncertainty of the policy. It is our default choice and
the final algorithm is shown in \autoref{algo:qwr_gradient}.


\section{Related work}



\paragraph{Reinforcement learning algorithms.}
Recent years have seen great advances in the field of reinforcement learning due to the use of deep neural networks as function approximators. \citet{mnih2013playing} introduced DQN, an off-policy algorithm learning a parametrized Q-value function through updates based on the Bellman equation. 
The DQN algorithm only computes the Q-value function, it does not learn an explicit policy. 
In contrast, policy-based methods such as REINFORCE \citep{reinforce1} learn a parameterized policy, typically by following the policy gradient \citep{reinforce2} estimated through Monte Carlo approximation of future returns. Such methods suffer from high variance, causing low sample efficiency. Actor-critic algorithms, such as A2C and A3C \citep{sutton1999a2c, mnih2016asynchronous}, decrease the variance of the estimate by jointly learning policy and value functions, and using the latter as an action-independent baseline for calculation of the policy gradient. The PPO algorithm \citep{schulman2017proximal} optimizes a clipped surrogate objective in order to allow multiple updates using the same sampled data.

\paragraph{Continuous control.}
\citet{lillicrap2015continuous} adapted Q-learning to continuous action spaces. In addition to a Q-value function, they learn a deterministic policy function optimized by backpropagating the gradient through the Q-value function. \citet{haarnoja2018soft} introduce Soft Actor-Critic (SAC): a method learning in a similar way, but with a stochastic policy optimizing the Maximum Entropy RL \citep{levine2018reinforcement} objective. Similarly to our method, SAC also samples from the policy during training.

\paragraph{Advantage-weighted regression.} The QWR algorithm is a successor of AWR proposed by \citet{peng2019advantageweighted}, which in turn is based on Reward-Weighted Regression (RWR, \citet{rwr}) and AC-REPS proposed by \citet{wirth2016acreps}. Mathematical and algorithmical foundations of advantage-weighted regression were developed by \citet{fqi}. The algorithms share the same good theoretical properties: RWR, AC-REPS, AWR, and QWR losses can be mathematically reformulated in terms of KL-divergence with respect to the optimal policy (see formulas (7)-(10) in \citet{peng2019advantageweighted}). 
QWR is different from AWR in the following key aspects: instead of empirical returns in the advantage estimation we train a $Q$ function (see formulas \ref{eq:awr_actor_loss} and \ref{eq:qwr_actor_loss} below for precise definition) and use sampling for the actor. QWR is different from AC-REPS as it uses deep learning for function approximation and Q-learning for fitting the critic, see Section~\ref{sec:qwr}.

Several recent works have developed algorithms similar to QWR. We provide a brief overview and ways of obtaining them from the QWR pseudocode (\autoref{algo:qwr_gradient}). AWR can be recovered by learning a value function $V(s)$ as a critic (line~\ref{ln:q-learning}) and sampling actions from the replay buffer (lines~\ref{ln:sampling-critic} and \ref{ln:sampling-actor} in Algorithm~\ref{algo:qwr_gradient}). AWAC \citep{awac} modifies AWR by learning a Q-function for the critic. We get it from QWR by sampling actions from the replay buffer (lines~\ref{ln:sampling-critic} and \ref{ln:sampling-actor}). Note that compared to AWAC, by sampling multiple actions for each state, QWR is able to take advantage of Q-learning to improve the critic. CRR \citep{crr} augments AWAC with training a distributional Q-function in line~\ref{ln:q-learning} and substituting different functions for computing advantage weights in line~\ref{ln:actor} \footnote{CRR sets the advantage weight function $f$ to be a hyperparameter in $\log \pi_\theta(a_j | s) f(Q_\phi(s, a_j) - \hat{V})$ (line~\ref{ln:actor}). In QWR, $f(x) = \exp(x / \beta)$.}. Again, compared to CRR, QWR samples multiple actions for each state, and so can take advantage of Q-learning. In a way similar to QWR, MPO \citep{abdolmaleki2018mpo} samples actions during actor training to improve generalization. Compared to QWR, it introduces a dual function for dynamically tuning $\beta$ in line~\ref{ln:actor}, adds a prior regularization for policy training and trains the critic using Retrace \citep{retrace} targets in line~\ref{ln:backup}. QWR can be thought of as a significant simplification of MPO, with addition of Q-learning to provide a better baseline for the actor. Additionally, the classical DQN \citep{dqn} algorithm for discrete action spaces can be recovered from QWR by removing the actor training loop (lines~\ref{ln:actor-begin}-\ref{ln:actor-end}), computing a maximum over all actions in Q-network training (line~\ref{ln:backup}) and using an epsilon-greedy policy w.r.t. the Q-network for data collection.




\paragraph{Offline reinforcement learning.} Offline RL is the main topic of the survey \citet{levine2020offline}. The authors state that ``offline reinforcement
learning methods equipped with powerful function approximation may enable data to be turned
into generalizable and powerful decision making engines''. We see this as one of the major challenges of modern RL and this work contributes to this challenge. Many current algorithms perform to some degree in offline RL, e.g., variants of DDPG and DQN developed by \citet{fujimoto2018offpolicy, agarwal2019striving}, 
as well as the MPO algorithm by \citet{abdolmaleki2018mpo} are promising alternatives to AWR and QWR analyzed in this work.

ABM \citep{abm} is a method of extending RL algorithms based on policy networks to offline settings. It first learns a prior policy network on the offline dataset using a loss similar to \autoref{eq:awr_actor_loss}, and then learns the final policy network using any algorithm, adding an auxiliary term penalizing KL-divergence from the prior policy. CQL \citep{cql} is a method of extending RL algorithms based on Q-networks to offline settings by introducing an auxiliary loss. To compute the loss, CQL samples actions on-line during training of the Q-network, similar to line~\ref{ln:q-learning} in QWR. EMaQ \citep{emaq} learns an ensemble of Q-functions using an Expected-Max backup operator and uses it during evaluation to pick the best action. The Q-network training part is similar to QWR with $F = \max$ in line~\ref{ln:backup} in Algorithm~\ref{algo:qwr_gradient}.


The imitation learning algorithm MARWIL by \citet{wang2018marwil} confirms that the advantage-weighted regression performs well in the context of complex games.

\begin{table*}[t]
\centering
\begin{tabular}{ l | c c c c }
Algorithm & Half-Cheetah & Walker & Hopper & Humanoid \\
\hline
QWR-LSE & $2323 \pm 332$ & $\mathbf{1301 \pm 445}$ & $\mathbf{1758 \pm 735}$ & $511 \pm 57$ \\
QWR-MAX & $2250 \pm 254$ & $1019 \pm 1185$ & $1187 \pm 345$ & $503 \pm 49$ \\
QWR-AVG & $1691 \pm 682$ & $1052 \pm 231$  & $420 \pm 65$ & $455 \pm 41$ \\
AWR & $-0.4 \pm 0$ & $67 \pm 11$ & $110 \pm 81$ & $500 \pm 4$ \\
SAC & $\mathbf{5492 \pm 8}$ & $493 \pm 6$ & $1197 \pm 175$ & $\mathbf{645 \pm 27}$ \\
PPO & $51 \pm 41$ & $-14 \pm 98$ & $15 \pm 75$ & $72 \pm 18$
\end{tabular}
\caption{Comparison of the variants of QWR with AWR \citep{peng2019advantageweighted}, SAC \citep{haarnoja2018soft} and PPO \citep{schulman2017proximal} on 4 MuJoCo environments at 100K samples. We report the median of 5 runs, $\pm$ half of the interquartile range.}
\label{tab:mujoco_ppo_sac_awr_qwr_100k}
\end{table*}

\begin{table*}[t]
\centering
\begin{tabular}{ l | c c c c c c }
Algorithm & Boxing & Breakout & Freeway & Gopher & Pong & Seaquest \\
\hline
QWR-LSE & ${\bf 4.6}$ & ${\bf 8}$ & $21.2$ & ${\bf 776}$ & $-7.6$ & $308$ \\
QWR-MAX & $-1.8$ & $0.8$ & $16.8$ & $580$ & ${\bf -2}$ & $252$ \\
QWR-AVG & $-0.8$ & $1.4$ & $19.2$ & $548$ & $-9$ &  $296$ \\
PPO & $-3.9$ & $5.9$ & $8$ & $246$ & $-20.5$ & ${\bf 370}$ \\
OTRainbow & $2.5$ & $1.9$ & ${\bf 27.9}$ & $349.5$ & $-19.3$ & $354.1$ \\
\hline
MPR & $16.1$ & $14.2$ & $23.1$ & $341.5$ & $-10.5$ & $361.8$ \\
MPR-aug & $30.5$ & $15.6$ & $24.6$ & $593.4$ & $-3.8$ & $603.8$ \\
SimPLe & $9.1$ & $16.4$ & $20.3$ & $845.6$ & $12.8$ & $683.3$ \\
\hline
Random & $0.1$ & $1.7$ & $0$ & $257.6$ & $-20.7$ & $68.4$ \\
\end{tabular}
\caption{Comparison of the variants of QWR with the sample-efficient variant of Rainbow \citep{rainbow, otrainbow}, MPR \citep{mpr}, SimPLe \citep{simple} and random scores on 6 Atari games at 100K samples. We report results of the the augmented and non-augmented version of the MPR algorithm. Since MPR and SimPLe are based on learning a model of the environment, we do not consider them when choosing the best scores.}
\label{tab:atari_100k}
\end{table*}

\begin{table*}[t]
\centering
\begin{tabular}{ l | c c c c }
Algorithm & Half-Cheetah & Walker & Hopper & Humanoid \\
\hline
QWR-LSE & $4511 \pm 85$ & $\mathbf{4558 \pm 83}$ & $\mathbf{3359 \pm 890}$ & $\mathbf{5675 \pm 236}$ \\
AWR & $2506 \pm 165$ & $1668 \pm 353$ & $1533 \pm 89$ & $639 \pm 20$ \\
SAC & $\mathbf{10433 \pm 224}$ & $4146 \pm 110$ & $3167 \pm 897$ & $5376 \pm 154$ \\
PPO & $1555 \pm 9$ & $1155 \pm 13$ & $1322 \pm 294$ & $1567 \pm 178$
\end{tabular}
\caption{Comparison of QWR-LSE with AWR, SAC and PPO on 4 MuJoCo environments at 1M samples.}
\label{tab:mujoco_1m}
\end{table*}

\begin{figure*}[t]
\centering
\begin{subfigure}{0.33\textwidth}
  \centering
  \includegraphics[width=\textwidth,trim={0 0 50cm 0},clip]{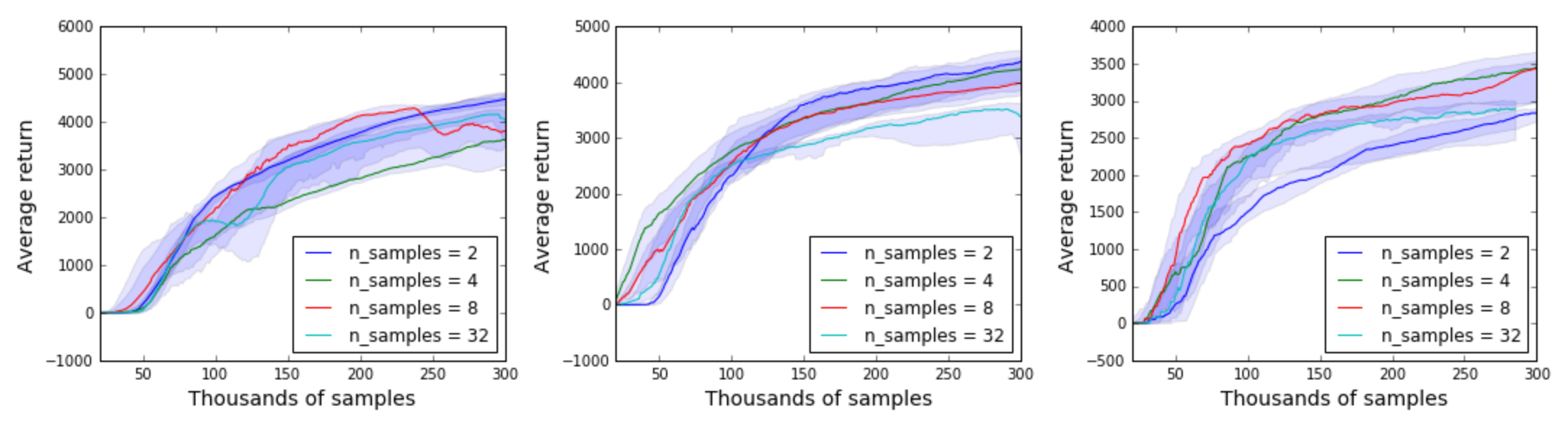}
  \caption{QWR-LSE, margin 1.}
  \label{fig:qwr_lse_1}
\end{subfigure}%
\begin{subfigure}{0.33\textwidth}
  \centering
  \includegraphics[width=\textwidth,trim={25cm 0 25cm 0},clip]{figures/hc_lse.png}
  \caption{QWR-LSE, margin 3.}
  \label{fig:qwr_lse_2}
\end{subfigure}%
\begin{subfigure}{0.33\textwidth}
  \centering
  \includegraphics[width=\textwidth,trim={50cm 0 0 0},clip]{figures/hc_lse.png}
  \caption{QWR-LSE, margin 7.}
  \label{fig:qwr_lse_3}
\end{subfigure}
\begin{subfigure}{0.33\textwidth}
  \centering
  \includegraphics[width=\textwidth,trim={0 0 50cm 0},clip]{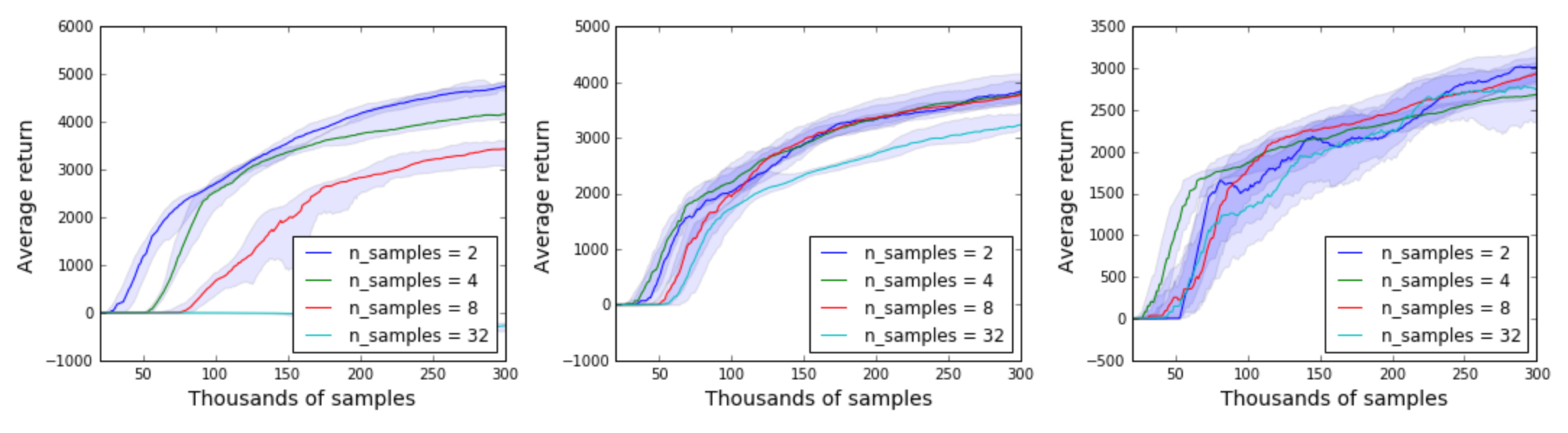}
  \caption{QWR-MAX, margin 1.}
  \label{fig:qwr_max_1}
\end{subfigure}%
\begin{subfigure}{0.33\textwidth}
  \centering
  \includegraphics[width=\textwidth,trim={25cm 0 25cm 0},clip]{figures/hc_max.png}
  \caption{QWR-MAX, margin 3.}
  \label{fig:qwr_max_2}
\end{subfigure}%
\begin{subfigure}{0.33\textwidth}
  \centering
  \includegraphics[width=\textwidth,trim={50cm 0 0 0},clip]{figures/hc_max.png}
  \caption{QWR-MAX, margin 7.}
  \label{fig:qwr_max_3}
\end{subfigure}%
\caption{Ablation of QWR with respect to the margin, the number of action samples and the method of training the critic. The results are shown on the Half-Cheetah environment. The plots show the median of 5 runs with the shaded area denoting the interquartile range.}
\label{fig:qwr_ablation}
\end{figure*}

\section{Experiments}

\paragraph{Neural architectures.} In all MuJoCo experiments, for both value and policy networks, we use multi-layer perceptrons with two layers 256 neurons each, and ReLU activations. In all Atari experiments, for both value and policy networks, we use the same convolutional architectures as in \citet{dqn}. To feed actions to the network, we embed them using one linear layer, connected to the rest of the network using the formula $o \cdot \tanh(a)$ where $o$ is the processed observation and $a$ is the embedded action. This is followed by the value or policy head. For the policy, we parameterize either the log-probabilities of actions in case of discrete action spaces, or the mean of a Gaussian distribution in case of continuous action spaces, while keeping the standard deviation constant, as $0.4$.

\subsection{Sample efficiency}
Since we are concerned with sample efficiency, we focus our first experiments on
the case when the number of interactions with the environment is limited. To use a single
number that allows comparisons with previous work both on MuJoCo and Atari, we decided to
restrict the number of interactions to 100K. This number is high enough, that the state-of-the-art algorithms such as SAC reach good performance.

We run experiments on 4  MuJoCo environments and 6 Atari games, evaluating three versions of QWR with the 3 backup operators introduced in Section~\ref{qwr:intro}: QWR-LSE (using log-sum-exp), QWR-MAX (using maximum) and QWR-AVG (using average). For all experiments, we set the Q target truncation horizon $T$ to 3. In MuJoCo experiments, we set the number of action samples $k$ to 4. In Atari experiments, because of the discrete action space, we can compute the policy loss for each transition explicitly, without sampling. All other hyperparameters are kept the same between those domains. We discuss the choice of $T, k$ and show ablations in \autoref{sec:ablations}, while more experimental details are given in Appendix~\ref{sec:expdetail}.

In Tables~\ref{tab:mujoco_ppo_sac_awr_qwr_100k} and \ref{tab:atari_100k} we present the final returns at 100K samples for the considered
algorithms and environments. To put them within a context, we also provide those results for SAC, PPO, OTRainbow - a variant of Rainbow tuned for sample efficiency, MPR and SimPLe.

On all considered MuJoCo tasks, QWR exceeds the performance of AWR and PPO. The better sample efficiency is particularly well visible in the case of Walker, where each variant of QWR performs better than any baseline considered. On Hopper, QWR-LSE - the best variant - outpaces all baselines by a large margin. On Humanoid, it comes close to SAC - the state of the art on MuJoCo.

QWR surpasses PPO and Rainbow in 4 out of 6 Atari games. In Gopher and Pong QWR outperforms even against the augmented and non-augmented versions of the model-based MPR algorithm.

\subsection{More samples}

To verify that our algorithm makes a good use of higher sample budgets, we also evaluate it on the 4 MuJoCo tasks at 1M samples. For the purpose of this experiment, we adapt several of the hyperparameters of QWR to the larger amount of data. The details are provided in Appendix~\ref{sec:expdetail}. We present the results in \autoref{tab:mujoco_1m}.

On Walker, Hopper and Humanoid, QWR outperforms all baselines. Only on Half-Cheetah it is surpassed by SAC. In all tasks, QWR achieves significantly higher scores than AWR and PPO, which shows that the sample-efficiency improvements applied in QWR translate well to the higher budget of 1M samples.

\subsection{Ablations}
\label{sec:ablations}

In \autoref{fig:qwr_ablation} we provide an ablation of QWR with respect to the backup method $F$, multistep target horizon $T$ ("margin") and the number of action samples $k$ to consider when training the actor and the critic. As we can see, the algorithm is fairly robust to the choice of these hyperparameters.

In total, the log-sum-exp backup (LSE) achieves the best results -- compare \autoref{fig:qwr_lse_2} and \autoref{fig:qwr_max_2}. Max backup performs well with margin 1, but is more sensitive to higher numbers of samples -- compare \autoref{fig:qwr_max_1} and \autoref{fig:qwr_max_2}.
The log-sum-exp backup is less vulnerable to this effect -- compare \autoref{fig:qwr_lse_1} and \autoref{fig:qwr_max_1}. Higher margins decrease performance -- see \autoref{fig:qwr_lse_3} and \autoref{fig:qwr_lse_2}. We conjecture this to be due to stale action sequences in the replay buffer biasing the multi-step targets. Again, the log-sum-exp backup is less prone to this issue -- compare \autoref{fig:qwr_lse_3} to \autoref{fig:qwr_max_3}.




\subsection{Offline RL}

Both QWR and AWR are capable of handling expert data.
AWR was shown to behave in a stable way when provided only with a number of expert trajectories (see Figure 7 in \citet{peng2019advantageweighted}) without additional data collection. In this respect, the performance of AWR is much more robust than the performance of PPO and SAC. In Figure~\ref{fig:awr_qwr_bc} we show the same result for QWR -- in terms of re-using the expert trajectories, it matches or exceeds AWR. The QWR trainings based on offline data were remarkably stable and worked well across all environments we have tried.

\begin{figure*}
  \centering
  \begin{subfigure}{.3\textwidth}
  \centering
  \includegraphics[width=\linewidth,height=7em]{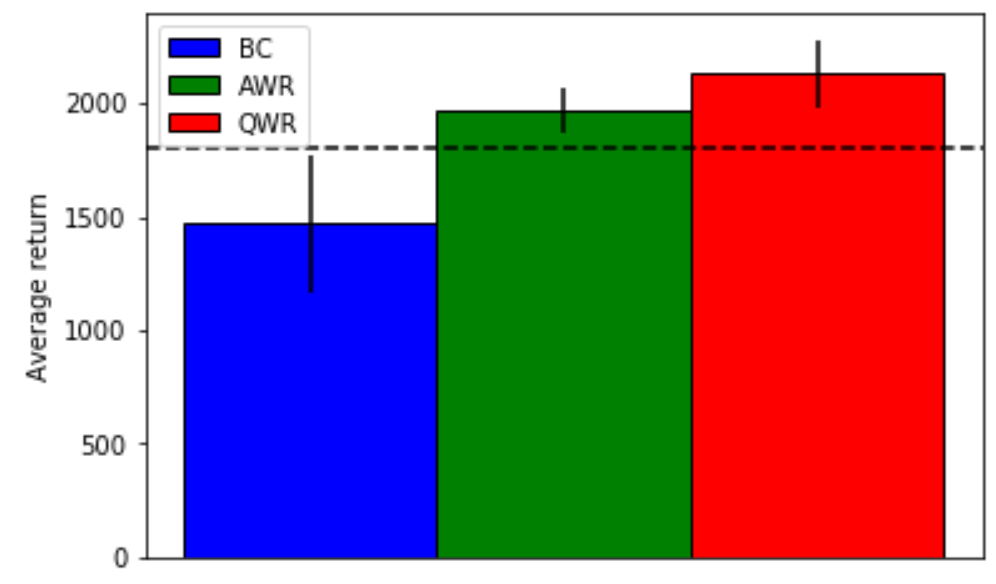}
  \caption{HalfCheetah}
  \label{fig:sub1bc}
\end{subfigure}%
\begin{subfigure}{.3\textwidth}
  \centering
  \includegraphics[width=\linewidth,height=7em]{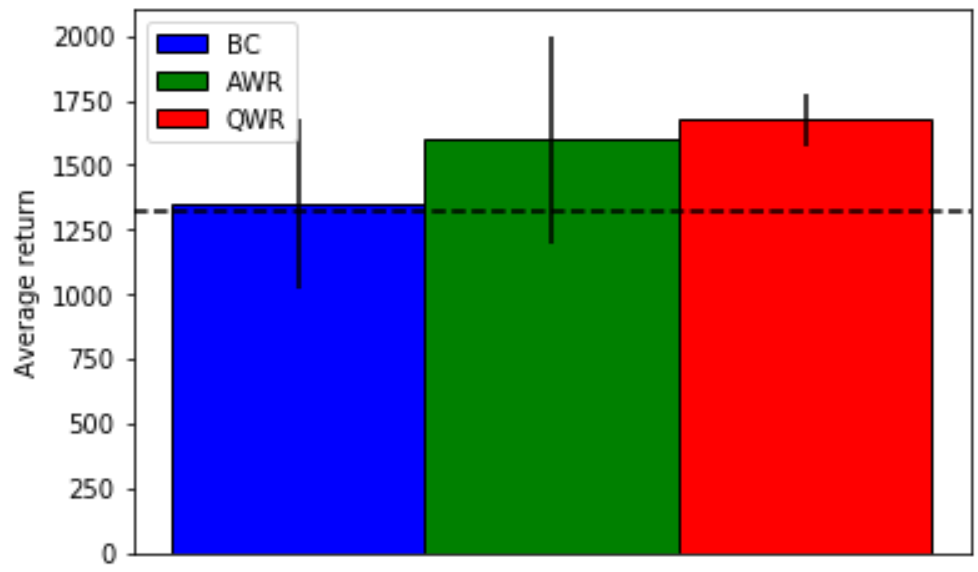}
  \caption{Hopper}
  \label{fig:sub2bc}
\end{subfigure}%
  \begin{subfigure}{.3\textwidth}
  \centering
  \includegraphics[width=\linewidth,height=7em]{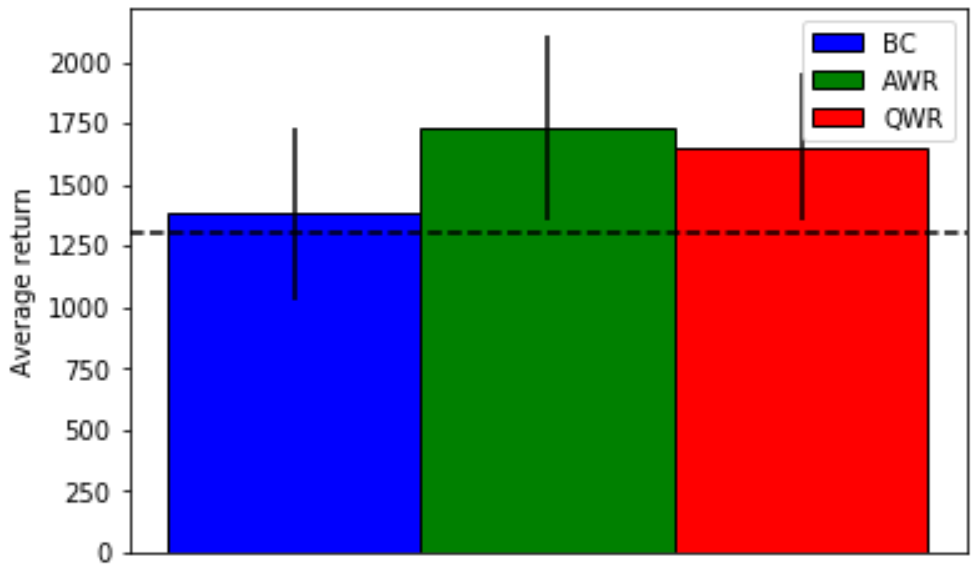}
  \caption{Walker2d}
  \label{fig:sub3bc}
\end{subfigure}

\begin{subfigure}{.3\textwidth}
  \centering
  \includegraphics[width=\linewidth,height=7em]{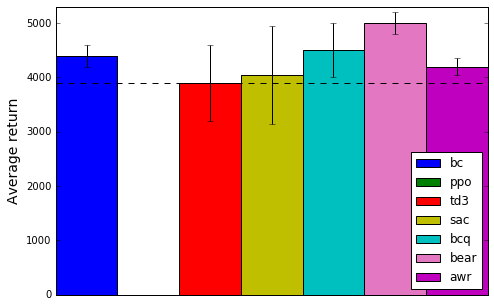}
  \caption{HalfCheetah}
  \label{fig:sub4bc}
\end{subfigure}%
\begin{subfigure}{.3\textwidth}
  \centering
  \includegraphics[width=\linewidth,height=7em]{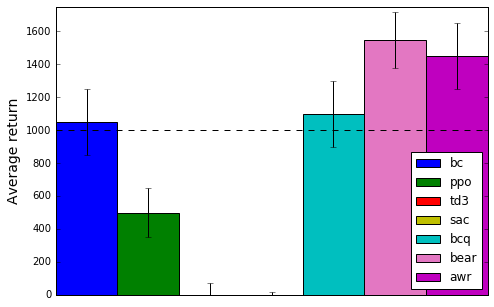}
  \caption{Hopper}
  \label{fig:sub5bc}
\end{subfigure}%
  \begin{subfigure}{.3\textwidth}
  \centering
  \includegraphics[width=\linewidth,height=7em]{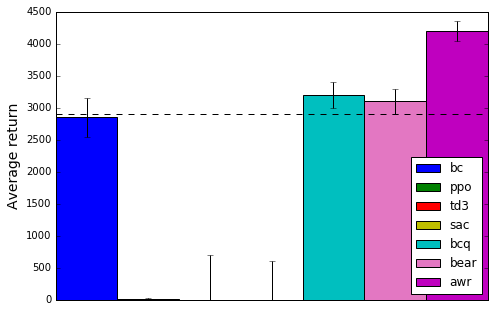}
  \caption{Walker2d}
  \label{fig:sub6bc}
\end{subfigure}
  \caption{Figures \ref{fig:sub1bc}, \ref{fig:sub2bc} and \ref{fig:sub3bc} show offline trainings based on 50 trajectories of length 1000 collected by diverse policies. The horizontal lines mark the average return of a policy from the dataset. The bars denote median returns out of 4 runs, and the vertical lines denote the interquartile range. Data for figures \ref{fig:sub4bc}, \ref{fig:sub5bc} and \ref{fig:sub6bc} is borrowed from \citet{peng2019advantageweighted} to cover a broader family of algorithms and show that offline training fails for many RL algorithms.}
\label{fig:awr_qwr_bc}
\end{figure*}

For the offline RL experiments, we have trained each algorithm for 30 iterations, without additional data collection. The training trajectories contained only states, actions and rewards, without any algorithm-specific data. In QWR, we have set the per-step sampling policies $\mu$ to be Gaussians with mean at the performed action and standard deviation set to $0.4$, same as in \citet{peng2019advantageweighted}.

\section{Discussion and Future Work}

We present Q-value Weighted Regression (QWR), an off-policy actor-critic algorithm that extends
Advantage Weighted Regression with action sampling and Q-learning. It is significantly more sample-efficient than AWR and works well with discrete actions and in visual domains, e.g., on Atari games. QWR consists of two interleaved steps of supervised training: the critic learning the Q function using a predefined backup operator, and the actor learning the policy with weighted regression based on multiple sampled actions. Thanks to this clear structure, QWR is simple to implement and debug. It is also stable in a wide range of hyperparameter choices and works well in the offline setting.

Importantly, we designed QWR thanks to a theoretical analysis that revealed why AWR may not work when there are limits on data collection in the environment.
Our analysis for the limited data regime is based on the \emph{state-determines-action} assumption
that allows to fully solve AWR analytically while still being realistic and indicative
of the performance of this algorithm with few samples.
We believe that using the \emph{state-determines-action} assumption can yield important insights into other RL algorithms as well.

QWR already achieves state-of-the-art results in settings with limited data and we believe that it can be further improved in the future. The critic training could benefit from the advances in Q-learning methods such as double Q-networks \citep{hasselt2015deep} or Polyak averaging \citep{Polyak1990NewMO}, already used in SAC. Distributional Q-learning \citet{bellemare2017distributional} and the use of ensembles like REM \citet{agarwal2020optimistic} could yield further improvements. 

Notably, the QWR results at 100K that we present are achieved with the same set of hyperparameters (except for the network architecture) both for MuJoCo environments and for Atari games. This is rare among deep reinforcement learning algorithms, especially among ones that strive for sample-efficiency. Combined with its stability and good performance in offline settings, this makes
QWR a compelling choice for reinforcement learning in domains with limited data.

\newpage
\bibliography{references.bib}
\bibliographystyle{icml2021}

\section{Appendix}

\subsection{Experimental Details}
\label{sec:expdetail}

We run experiments on 4 \href{http://www.mujoco.org/}{MuJoCo} environments: \emph{Half-Cheetah}, \emph{Walker}, \emph{Hopper} and \emph{Humanoid} and on 6 \href{https://github.com/mgbellemare/Arcade-Learning-Environment}{Atari} games: \emph{Boxing}, \emph{Breakout}, \emph{Freeway}, \emph{Gopher}, \emph{Pong} and \emph{Seaquest}. For the MuJoCo environments, we limit the episode length to $1000$. For the Atari environments, we apply the following preprocessing:

\begin{itemize}
    \item Repeating each action for 4 consecutive steps, taking a maximum of 2 last frames as the observation.
    \item Stacking 4 last frames obtained from the previous step in one observation.
    \item Gray-scale observations, cropped and rescaled to size $84 \times 84$.
    \item Maximum $10\mathrm{K}$ interactions per episode.
    \item Random number of no-op actions from range $[0..30]$ at the beginning of each episode.
    \item Rewards clipped to the $[-1, 1]$ range during training.
\end{itemize}

Our code with the exact configurations we use to reproduce the experiments is available as open source\footnote{\url{url_removed_to_preserve_anonymity}}.
We use the same hyperparameters for the MuJoCo and Atari experiments, and almost the same hyperparameters for the 100K and 1M sample budgets. The hyperparameters and their tuning ranges are reported in \autoref{tab:hparams}.

Before calculating the actor loss, we normalize the advantages over the entire batch by subtracting their mean and dividing by their standard deviation, same as \citet{peng2019advantageweighted}. We perform a similar procedure for the log-sum-exp backup operator used in critic training. Before applying the backup, we divide the Q-values by a computed measure of their scale $s$. After applying the backup, we re-scale the target by $s$. There is no need to subtract the mean, as log-sum-exp is translation-invariant.
\begin{equation}
    F(X) = \tau s(X) \log \left[ \frac{1}{|X|} \sum_{x \in X} \exp\left(\frac{x}{\tau s(X)}\right) \right]
\end{equation}
The parameters of this backup are the only ones different between the 100K and 1M experiments. For 100K, we use $\tau = 0.3$ and mean absolute deviation as $s(X)$. For 1M, we use $\tau = 1.0$ and standard deviation as $s(X)$.

\begin{table*}[t]
\centering
\begin{tabularx}{\textwidth}{ l | X X }
Hyperparameter & Value & Considered range \\
\hline
$k$ - number of action samples & $4$ & $\{2, 4, 8, 32\}$ \\
$T$ - multi-step target horizon ("margin") & $3$ & $\{1, 3, 7\}$ \\
$\beta$ - actor loss temperature & $1$ & $\{0.1, 0.3, 1, 3, 10\}$ \\
$F$ - critic backup operator & log-sum-exp & mean, log-sum-exp, $\max$ \\
$\gamma$ - discount factor for the returns & $0.99$ & $\{0.99\}$ \\
$\lambda$ - discount factor in TD($\lambda$) & $0.95$ & $\{0.95\}$ \\
$\alpha_\pi$ - actor learning rate & \num{1e-4} & $\{\num{1e-4}, \num{2e-4}, \num{5e-4}\}$ \\
$\alpha_V$ - critic learning rate & \num{5e-4} & $\{\num{2e-4}, \num{5e-4}, \num{1e-3}\}$ \\
batch size (actor and critic) & $256$ & $\{128, 256, 512\}$ \\
replay buffer size & $50\mathrm{K}$ interactions & $\{20\mathrm{K}, 50\mathrm{K}, 100\mathrm{K}, 200\mathrm{K}\}$ \\
\texttt{n\_actor\_steps} & $1000$ & $\{1000, 2000, 3000\}$ \\
\texttt{n\_critic\_steps} & $1000$ & $\{200, 500, 1000, 2000\}$ \\
\texttt{update\_frequency} & $100$ & $\{100, 200, 300\}$ \\
\texttt{n\_iterations} & \multicolumn{2}{p{10cm}}{Until we reach the desired number of interactions. In all experiments, we collect $1000$ interactions with the environment in each iteration of the algorithm.} \\
\end{tabularx}
\caption{Hyperparameter values and considered ranges.}
\label{tab:hparams}
\end{table*}

When training the networks, we use the Adam optimizer.  We use the standard architectures for deep networks. In MuJoCo experiments we use a multi-layer perceptron with two layers 256 neurons each and ReLU activations. In Atari experiments we use the same convolutional architectures as \citet{dqn}.

The 100K experiments took around 18 hours each, on a single \href{https://cloud.google.com/tpu/docs/system-architecture}{TPU v2 chip}. The 1M experiments took around 180 hours each, using the same hardware.

\subsection{Formal Analysis of AWR with Limited Data}
\label{sec:limdata}

Since sample efficiency is one of the key challenges in deep reinforcement learning,
it would be desirable to have better tools to understand why any RL algorithm -- for instance AWR -- is
sample efficient or not. This is hard to achieve in the general setting, but we identify
a key simplifying assumption that allows us to solve AWR analytically and identify the source
of its problems.

The assumption we introduce, called \emph{state-determines-action}, concerns the content of the
replay buffer $\mathcal{D}$ of an off-policy RL algorithm. The replay buffer contains all state-action pairs
that the algorithm has visited so far during its interactions with the environment. We say that
a replay buffer $\mathcal{D}$ satisfies the \emph{state-determines-action} assumption when for each
state $s$ in the buffer, there is a unique action that was taken from it, formally:
\[ \text{for all } (s, a), (s', a') \in \mathcal{D}: s = s' \implies a = a'. \]

A simplifying assumption like \emph{state-determines-action} is useful only if it indeed simplifies the analysis of RL algorithms.
We show that in case of AWR it does even more -- it allows us to
analytically calculate the final policy that the algorithm produces.
In the case of AWR, it turns out that the resulting policy yields no improvement over the sampling policy.


While AWR achieves very good results after longer training, it is not very sample efficient,
as noted in the future work section of \citep{peng2019advantageweighted}. To address this
problem, let us analyze a single loop of actor training in AWR:
\begin{align} \label{eqn:awrupdate}
\pi^{i + 1}_\mathcal{D} \gets \argmax_\pi \mathbb{E}_{\mathbf{s, a} \sim \mathcal{D}} \log \pi(\mathbf{a} | \mathbf{s})~\xi^{\mathbf{s, a}}_\mathcal{D}, & \\
\textrm{where}~\xi^{\mathbf{s, a}}_\mathcal{D} = \exp \left( \frac{1}{\beta}(\mathcal{R}_\mathcal{D}^{\mathbf{s, a}} - V^i_\mathcal{D}(\mathbf{s})) \right). &
\end{align}

How does this update act on a replay buffer that satisfies the \emph{state-determines-action} assumption?
It turns out that we can answer this question analytically using the following theorem.

\begin{theorem} \label{thm:awr_disc}
Let $\mathcal{A}$ be a discrete action space. Let a replay buffer $\mathcal{D} \subseteq \mathcal{S} \times \mathcal{A}$ satisfy the state-determines-action assumption. Let $\pi_\mathcal{D}$ be the probability function of a distribution that clones the behavior from $\mathcal{D}$, i.e., that assigns to each state $s$ from $\mathcal{D}$ the action $a$ such that
$(s, a) \in \mathcal{D}$ with probability $1$.  Then, under the AWR update,
$\pi^{i + 1}_\mathcal{D} \gets \pi_\mathcal{D}$.
\end{theorem}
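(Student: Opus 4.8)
The plan is to use the state-determines-action assumption to decouple the actor objective across states, reducing it to a family of independent scalar maximizations, each of which is solved trivially by putting all probability mass on the recorded action. The conceptual punchline is that the exponentiated-advantage weights, though they reshape the loss, are strictly positive \emph{constants} with respect to $\pi$, and therefore cannot change which policy is optimal: the maximizer coincides with the behavior-cloning policy one would obtain with no weighting at all.

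First I would rewrite the expectation $\mathbb{E}_{\mathbf{s,a}\sim\mathcal{D}}$ in the update \eqref{eqn:awrupdate} as a weighted sum over the \emph{distinct} states occurring in $\mathcal{D}$. Let $S_\mathcal{D}$ denote this set; by state-determines-action, each $s \in S_\mathcal{D}$ is paired with a unique action, which I call $a_s$. Writing $p_\mathcal{D}(s)$ for the probability that a draw from $\mathcal{D}$ returns state $s$, the objective becomes
\[
\pi^{i+1}_\mathcal{D} \gets \argmax_\pi \sum_{s \in S_\mathcal{D}} p_\mathcal{D}(s)\,\xi^{s,a_s}_\mathcal{D}\,\log \pi(a_s \mid s).
\]
I would then set $c_s := p_\mathcal{D}(s)\,\xi^{s,a_s}_\mathcal{D}$ and note that $c_s > 0$, since $\xi^{s,a_s}_\mathcal{D}$ is an exponential (hence positive) and is independent of $\pi$. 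Because the conditionals $\pi(\cdot\mid s)$ for distinct states are independent degrees of freedom, the sum decouples and can be optimized state by state: for each $s$ I maximize $c_s\,\log \pi(a_s\mid s)$ over all probability distributions on the discrete space $\mathcal{A}$. As $\log$ is strictly increasing, $\pi(a_s\mid s)\in[0,1]$, and $c_s>0$, this term attains its maximum exactly at $\pi(a_s\mid s)=1$. Collecting these per-state solutions gives precisely $\pi_\mathcal{D}$, so $\pi^{i+1}_\mathcal{D} = \pi_\mathcal{D}$.

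The main obstacle is conceptual rather than computational, and it concerns the space over which the $\argmax$ is taken. The argument is cleanest when $\pi$ ranges over all tabular policies, so that the point mass $\pi(a_s\mid s)=1$ is genuinely attainable and the per-state problems are truly independent. For a restricted parametric family, such as softmax over logits, the point mass is only a supremum approached as the parameters diverge, and I would phrase the conclusion accordingly, observing that the update is driven monotonically toward $\pi_\mathcal{D}$. A secondary point to handle carefully is the decoupling itself: it is valid \emph{precisely because} state-determines-action collapses each state's contribution to a single action, so that no two summands ever compete for probability mass within the same conditional $\pi(\cdot\mid s)$. Were a single state paired with two distinct actions, the per-state subproblem would become a genuine trade-off governed by the relative weights, which is exactly the mechanism that AWR is designed to exploit and which this assumption suppresses.
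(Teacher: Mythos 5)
Your proposal is correct and follows essentially the same route as the paper's own proof: both observe that the weights $\xi^{\mathbf{s,a}}_\mathcal{D}$ are strictly positive constants with respect to $\pi$, that $\log \pi(\mathbf{a}\mid\mathbf{s}) \leq 0$ for a discrete policy, and that the objective is therefore maximized by assigning probability $1$ to the unique recorded action at each state. Your version is in fact slightly more careful than the paper's, since you make the per-state decoupling explicit and flag the tabular-versus-parametric caveat, but the underlying argument is identical.
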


\begin{proof}
By definition of the AWR update rule, $\pi^{i + 1}_\mathcal{D} \gets \argmax_\pi \mathbb{E}_{\mathbf{s, a} \sim \mathcal{D}} [ \log \pi(\mathbf{a} | \mathbf{s}) \xi^{\mathbf{s, a}}_\mathcal{D} ]$. Recall that the number $\xi^{\mathbf{s, a}}_\mathcal{D} > 0$ as an exponent of another number is always positive.
Since $\mathcal{A}$ is a discrete action space, $\pi$ is a 
discrete policy
and
we have $\pi(\mathbf{a} | \mathbf{s}) \leq 1$,
so $\log \pi(\mathbf{a} | \mathbf{s})$ in the considered equation is at most 0
($\log$ is a strictly increasing function and $\log(1)=0$).
Thus the value $\log \pi(\mathbf{a} | \mathbf{s})~\xi^{\mathbf{s, a}}_\mathcal{D}$ can be at most $0$
and it reaches its maximum value for the policy that assigns probability $1$ to the action $a$ in state $s$
for each $(s, a) \in \mathcal{D}$. Therefore $\pi_\mathcal{D}$ attains the $\argmax_\pi$ as required.
\end{proof}




As we can see from the above theorem, the AWR update rule will insist on cloning the action
taken in the replay buffer as long as it satisfies the \emph{state-determines-action} assumption.
In the extreme case of a deterministic environment, the new policy $\pi_\mathcal{D}$ will not add any new data
to the buffer, only replay a trajectory already in it. So the whole AWR loop will end
with the policy $\pi_\mathcal{D}$, which yields no improvement.

In the next section, we prove an analogous theorem for continuous action spaces.

\subsubsection{Continuous action spaces}
The statement of Theorem~\ref{thm:awr_disc} must be adjusted for the case of continuous actions.
First of all, let us clarify the notation of AWR update introduced in Equation~\ref{eqn:awrupdate}:
\begin{equation*}
\pi^{i + 1}_\mathcal{D} \gets \argmax_\pi \mathbb{E}_{\mathbf{s, a} \sim \mathcal{D}} \left[ \log \pi(\mathbf{a} | \mathbf{s})~\xi^{\mathbf{s, a}}_\mathcal{D} \right]
\end{equation*}
For discrete actions, the symbol $\pi(\mathbf{a|s})$ denotes the probability function of a discrete distribution. In the continuous setting, we use it to denote probability density functions.

Now let us define the policy that "clones the behavior from the replay buffer". Intuitively that would be a distribution that concentrates most of its probability mass arbitrarily close to the action in the replay buffer.

\begin{theorem} \label{thm:awr_cont}
Let $\mathcal{A}$ be a continuous action space. Let a replay buffer $\mathcal{D} \subseteq \mathcal{S} \times \mathcal{A}$ satisfy the state-determines-action assumption.
For a given $\varepsilon > 0$ let us consider the following family of parameterized Gaussian distributions
\begin{equation*}
    \pi_{\mu, \sigma}(\mathbf{a|s}) = \frac{1}{\sigma(\mathbf{s})\sqrt{2\pi}}
    e^{-\frac{1}{2}(\frac{\mathbf{a}-\mu(\mathbf{s})}{\sigma(\mathbf{s})})^2},
\end{equation*}
where $\sigma(\mathbf{s}) \geq \varepsilon$, and define $\pi_\mathcal{D}^\epsilon = \pi_{\mu, \sigma}$ such that $\sigma(\mathbf{s}) = \varepsilon$ and $\mu(\mathbf{s}) = \mathbf{a}~$ for $\mathbf{(s, a)} \in \mathcal{D}$.
If we perform the optimization in the AWR update over such a family of distributions,
we get $\pi^{i + 1}_\mathcal{D} \gets \pi_\mathcal{D}^\epsilon$.

\end{theorem}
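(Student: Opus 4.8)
The plan is to exploit the fact that, under state-determines-action, the AWR objective decouples across the distinct states appearing in $\mathcal{D}$, so that the optimization over the function class $(\mu, \sigma)$ reduces to an independent, pointwise maximization at each buffer state. I would begin by rewriting the expectation in the update rule as a weighted sum over the entries of $\mathcal{D}$, grouped by distinct states: since each state $s$ in the buffer is paired with a \emph{unique} action $a_s$, the objective becomes $\sum_s w_s \log \pi_{\mu,\sigma}(a_s \mid s)$ with strictly positive weights $w_s$ (each an aggregate of the corresponding $\xi^{\mathbf{s,a}}_\mathcal{D}$ factors, which are positive as exponentials). Crucially, the summand for a given $s$ depends on the functions $\mu, \sigma$ only through their values $\mu(s), \sigma(s)$, and because distinct states impose no conflicting constraints, these values may be chosen freely at each state subject only to $\sigma(s) \geq \varepsilon$.

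Next I would carry out the pointwise maximization. Writing out the log-density,
\[
\log \pi_{\mu,\sigma}(a_s \mid s) = -\log\bigl(\sigma(s)\sqrt{2\pi}\bigr) - \tfrac{1}{2}\Bigl(\tfrac{a_s - \mu(s)}{\sigma(s)}\Bigr)^2,
\]
I observe that the two terms can be optimized in sequence. The squared term is non-positive and vanishes exactly when $\mu(s) = a_s$, and this choice is available and independent of $\sigma(s)$; so the mean is pinned to the buffer action. With the squared term eliminated, the remaining contribution $-\log(\sigma(s)\sqrt{2\pi})$ is strictly decreasing in $\sigma(s)$, hence maximized over $\{\sigma(s) \geq \varepsilon\}$ at the boundary $\sigma(s) = \varepsilon$. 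Since $w_s > 0$, maximizing each summand maximizes the whole sum, and the joint maximizer is exactly $\mu(s) = a_s$, $\sigma(s) = \varepsilon$, i.e. $\pi_\mathcal{D}^\epsilon$.

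The step I expect to require the most care is the justification of the pointwise decomposition, and it is precisely here that the state-determines-action assumption does its work: if a single state appeared with two different actions, the requirement $\mu(s) = a_s$ could not be satisfied for both, and the optimum would instead be a variance-inflated compromise rather than a near-clone. The assumption rules this out, guaranteeing that the $\mu$-requirements at distinct buffer entries are mutually consistent and simultaneously attainable within the Gaussian family. I would also flag one technical point that motivates the $\varepsilon$ lower bound: without it, $-\log(\sigma(s)\sqrt{2\pi}) \to +\infty$ as $\sigma(s) \to 0^+$, so no maximizer in the family would exist and the ``clone'' would only be approached by a degenerating sequence of Gaussians collapsing to a Dirac mass at $a_s$. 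Constraining $\sigma \geq \varepsilon$ makes the supremum attained and yields the explicit near-deterministic policy $\pi_\mathcal{D}^\epsilon$ as the exact maximizer.
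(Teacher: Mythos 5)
Your proof is correct and follows essentially the same route as the paper's: write out the Gaussian log-density, pin $\mu(s)$ to the buffer action to kill the quadratic term, then use monotonicity in $\sigma$ to push it to the boundary $\varepsilon$. Your version is in fact somewhat more explicit than the paper's about why the optimization decouples across states (the positive weights $\xi^{\mathbf{s,a}}_\mathcal{D}$ and the consistency of the per-state constraints under state-determines-action), which the paper leaves implicit.
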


\begin{proof}
The reasoning is similar to the proof of Theorem~\ref{thm:awr_disc} but we cannot
rely on $\log \pi(\mathbf{a} | \mathbf{s}) \leq 0$, as probability density functions can take arbitrarily large values.
Let $s$ be any state that for some $a$ we have $(s, a) \in \mathcal{D}$.
For the assumed family of distributions we have
\begin{align*}
\log \pi(a|s) & =  \log(\frac{1}{\sigma\sqrt{2\pi}}) + \log(e^{-\frac{1}{2}(\frac{a-\mu}{\sigma})^2}) = \\
& = -\log(\sigma\sqrt{2\pi}) - \frac{1}{2}(\frac{a-\mu}{\sigma})^2.
\end{align*}
$\log \pi(a|s)$ is a quadratic function of $\mu$, so it attains the maximum value at $\mu = a$~for every $\sigma > 0$.
Now let's look at
\begin{equation*}
\frac{\partial}{\partial \sigma} \log(\frac{1}{\sigma\sqrt{2\pi}}) =
-\frac{1}{\sigma} < 0.
\end{equation*}
The derivative is negative regardless of $\sigma$, so $\log \pi(\mathbf{a} | \mathbf{s})$ is maximized for the lowest allowed $\sigma(s) = \varepsilon$ and $\mu(s) = a$.
This is true for arbitrary state-action pair such that $(s, a) \in \mathcal{D}$.
So under the AWR update we get $\pi^{i + 1}_\mathcal{D} \gets \pi_\mathcal{D}^\epsilon$.
\end{proof}

This gives us intuition that the probability distributions $\pi(\mathbf{a | s})$ commonly used in RL (e.g. Gaussian) can be improved by increasing the density at $\mathbf{a}$ and decreasing it everywhere else. For those distributions, the maximum can come arbitrarily close to the Dirac delta, where $\pi(\mathbf{a} | \mathbf{s}) = \infty$.

Given that AWR aims to copy the replay buffer, as demonstrated by Theorems~\ref{thm:awr_disc} and \ref{thm:awr_cont}, how come this algorithm works so well in practice, given enough interactions? First of all,
note that for this effect to occur, the neural network used for AWR actor must be
large enough and trained long enough to memorize the data from the replay buffer.
Furthermore, the policy it learns must be allowed to express distributions
that assign probability $1$ to a single action. This holds for environments with discrete
actions, and for continuous actions with distributions with controlled scale, but it is not
true e.g. when using Gaussian distributions with fixed variance. However, in the latter case,
the proof of \autoref{thm:awr_cont} shows that the AWR update will place the mean of the
policy distribution at the performed action, regardless of the variance, which still leads to no
improvement over the sampling policy.

In the next section, we show that using an algorithm, that corrects this cloning behavior,
leads to improved sample efficiency.

\subsection{Formal Analysis of QWR with Limited Data} \label{sec:limdata-qwr}
To see how QWR performs under limited data, we are going to formulate a positive theorem showing that it achieves the policy improvement that AWR aims for even in a limited data setting.
Note that this time we allow replay buffers that do not necessarily satisfy the \textit{state-determines-action} assumption. But, for clarity, we make a simplifying assumption that the replay buffer has been sampled by a single policy $\mu$.

Recall the QWR update rule:
\begin{align}
\begin{split}
\pi^{i + 1}_\mathcal{D} \gets \argmax_\pi \mathbb{E}_{\mathbf{s} \sim \mathcal{D}} \mathbb{E}_{\mathbf{a} \sim \mu(\mathbf{\cdot | s})} \log \pi(\mathbf{a} | \mathbf{s})~\xi_\mu^\mathbf{s, a}, \\
\begin{aligned}
\textrm{where}~\xi_\mu^\mathbf{s, a} &= \exp \left( \frac{1}{\beta} (Q_\mu(\mathbf{s, a}) - \hat{V}_\mu(\mathbf{s})) \right), \\
\hat{V}_\mu(\mathbf{s})~&=~\mathbb{E}_{a \sim \mu(\cdot | \mathbf{s})} Q_\mu(\mathbf{s, a}),
\end{aligned}
\end{split}
\end{align}
and $\mathcal{D}$ is the set of states in the replay buffer.

Let $\pi^\star_\mu(\mathbf{a | s}) \propto \mu(\mathbf{a | s}) \exp \left( \frac{1}{\beta} \left( \mathcal{R}_\mathcal{\mu}^{\mathbf{s, a}} -  V_\mu(\mathbf{s}) \right) \right)$, where $V_\mu$ is the state value function of $\mu$. This is the policy optimizing the expected improvement over the sampling policy $\mu$, subject to a KL constraint -- the same as in Equation~36 in
\citet{peng2019advantageweighted}.

Since $\pi_\mu^\star$ is the target policy resulting from the AWR derivation, we know
from~\citet{peng2019advantageweighted} that AWR will update towards this policy in the limit,
when the replay buffer is large enough. But from Theorem~\ref{thm:awr_disc} we know that it will
fail to perform this update when the state-determines-action assumption holds.
Below we show that QWR will perform the same desirable update for any replay buffer,
as long as we restrict the attention to states in the buffer.

\begin{theorem} \label{thm:qwr}
Let $\mathcal{D} \subseteq \mathcal{S}$ be a finite sample from $d_\mu(\mathbf{s})$ - the undiscounted state distribution of a policy $\mu(\mathbf{a | s})$. Let $Q_\mu$ be the state-action value function for $\mu$, so $Q_\mu(\mathbf{s, a}) = \mathcal{R}_\mathcal{\mu}^{\mathbf{s, a}}$ for any state and action. Then, under the QWR actor update, $\pi^{i + 1}_\mathcal{D} \gets \pi^\star_\mu|_\mathcal{D}$,
where $\pi^\star_\mu|_\mathcal{D}$ is the policy $\pi^\star_\mu$. restricted to the set of states $\mathcal{D}$.
\end{theorem}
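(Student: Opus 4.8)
The plan is to exploit the fact that, since $\mathcal{D}$ is finite, the outer expectation $\mathbb{E}_{\mathbf{s}\sim\mathcal{D}}$ is a finite weighted sum over states and the QWR actor objective therefore \emph{decomposes into independent per-state subproblems}. The conditional distributions $\pi(\cdot\,|\,\mathbf{s})$ for distinct $\mathbf{s}$ appear in separate summands and are subject to separate normalization constraints, so each may be optimized on its own; for $\mathbf{s}\notin\mathcal{D}$ the objective imposes no constraint, which is exactly why the conclusion is stated only for $\pi^\star_\mu|_\mathcal{D}$. Fixing an arbitrary $\mathbf{s}\in\mathcal{D}$, I would maximize the single-state objective
\begin{equation*}
\mathbb{E}_{\mathbf{a}\sim\mu(\cdot\,|\,\mathbf{s})}\,\log\pi(\mathbf{a}\,|\,\mathbf{s})\,\xi_\mu^{\mathbf{s,a}}
= \int \mu(\mathbf{a}\,|\,\mathbf{s})\,\xi_\mu^{\mathbf{s,a}}\,\log\pi(\mathbf{a}\,|\,\mathbf{s})\,d\mathbf{a}
\end{equation*}
over all probability densities $\pi(\cdot\,|\,\mathbf{s})$ (the discrete case replaces the integral by a sum and is handled identically).

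The heart of the proof is the closed-form solution of this subproblem. Writing $w(\mathbf{a}) = \mu(\mathbf{a}\,|\,\mathbf{s})\,\xi_\mu^{\mathbf{s,a}}\ge 0$ and $Z = \int w(\mathbf{a})\,d\mathbf{a}$, I would recognize the objective as a scaled cross-entropy: with $q(\mathbf{a}) = w(\mathbf{a})/Z$ a valid density, one has $\int w\log\pi\,d\mathbf{a} = Z\bigl(-H(q) - \mathrm{KL}(q\,\|\,\pi)\bigr)$, where $H(q)$ does not depend on $\pi$. By nonnegativity of the KL divergence, with equality iff $\pi = q$, the unique maximizer is $\pi^{i+1}_\mathcal{D}(\cdot\,|\,\mathbf{s}) = q \propto \mu(\mathbf{a}\,|\,\mathbf{s})\,\xi_\mu^{\mathbf{s,a}}$. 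I prefer this information-inequality argument to a Lagrange-multiplier computation because it gives uniqueness of the optimizer directly and covers the continuous and discrete cases uniformly.

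It then remains to simplify and match with $\pi^\star_\mu$. Because $\hat V_\mu(\mathbf{s})$ does not depend on $\mathbf{a}$, the factor $\exp(-\tfrac{1}{\beta}\hat V_\mu(\mathbf{s}))$ inside $\xi_\mu^{\mathbf{s,a}}$ is constant over actions and is absorbed into the normalizer, leaving $\pi^{i+1}_\mathcal{D}(\mathbf{a}\,|\,\mathbf{s}) \propto \mu(\mathbf{a}\,|\,\mathbf{s})\exp(\tfrac{1}{\beta}Q_\mu(\mathbf{s,a}))$. On the other side, the definition of $\pi^\star_\mu$ carries the $\mathbf{a}$-independent offset $V_\mu(\mathbf{s})$, which is absorbed in the same way (indeed $\hat V_\mu(\mathbf{s}) = \mathbb{E}_{\mathbf{a}\sim\mu}Q_\mu(\mathbf{s,a}) = V_\mu(\mathbf{s})$, so the two baselines even coincide), and the hypothesis $Q_\mu(\mathbf{s,a}) = \mathcal{R}_\mu^{\mathbf{s,a}}$ identifies the exponents. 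Hence $\pi^\star_\mu(\mathbf{a}\,|\,\mathbf{s}) \propto \mu(\mathbf{a}\,|\,\mathbf{s})\exp(\tfrac{1}{\beta}Q_\mu(\mathbf{s,a}))$ as well; two normalized densities proportional to the same function of $\mathbf{a}$ must agree, so $\pi^{i+1}_\mathcal{D}$ and $\pi^\star_\mu$ coincide on every $\mathbf{s}\in\mathcal{D}$, giving $\pi^{i+1}_\mathcal{D} = \pi^\star_\mu|_\mathcal{D}$.

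The main obstacle I anticipate is analytic bookkeeping in the continuous case rather than algebra: I must ensure the normalizer $Z$ is finite, so that $q$ is a genuine density and the KL step is legitimate, and that the pointwise maximizer lies in the admissible class of distributions. I would flag integrability of $w$ as the one real hypothesis, noting that it holds whenever $Q_\mu$ is bounded (e.g. bounded rewards with discounting), since then $\xi_\mu^{\mathbf{s,a}}$ is bounded and $w$ integrates because $\mu$ is a probability density; the remainder of the argument is then a routine consequence of the information inequality.
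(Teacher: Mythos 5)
Your proposal is correct and follows essentially the same route as the paper's own proof: both reduce the objective per state to a reweighted integral proportional to $\mathbb{E}_{\mathbf{a}\sim\pi^\star_\mu(\cdot|\mathbf{s})}\log\pi(\mathbf{a}|\mathbf{s})$ (the paper calls this a change of measure) and conclude by the Gibbs/cross-entropy inequality that the maximizer is $\pi^\star_\mu|_\mathcal{D}$. Your write-up is somewhat more careful than the paper's, since you make the per-state decomposition, the normalizing constant $Z$, the uniqueness via nonnegativity of the KL divergence, and the integrability of the weight $w$ explicit, all of which the paper leaves implicit.
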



\begin{proof}
Let $s$ be an arbitrary state in $\mathcal{D}$.
From the definition of $\hat{V}_\mu(\mathbf{s})$ we have
\begin{equation*}
    \hat{V}_\mu(\mathbf{s})~=~\mathbb{E}_{a \sim \mu(\cdot | \mathbf{s})} Q_\mu(\mathbf{s, a}) = V_\mu(\mathbf{s}).
\end{equation*}
Since $Q_\mu(\mathbf{s, a}) = \mathcal{R}_\mathcal{\mu}^{\mathbf{s, a}}$,
\begin{align} \label{eq:sampling_from_mu}
\begin{split}
\pi^{i + 1}_\mathcal{D} \gets~&\argmax_\pi \mathbb{E}_{\mathbf{s} \sim \mathcal{D}} \mathbb{E}_{\mathbf{a} \sim \mu(\mathbf{\cdot | s})} \log \pi(\mathbf{a} | \mathbf{s})~\xi_\mu^\mathbf{s, a} \\
\textrm{where}~\xi_\mu^\mathbf{s, a} &= \exp \left( \frac{1}{\beta} (\mathcal{R}_\mathcal{\mu}^{\mathbf{s, a}} - V_\mu(\mathbf{s})) \right).
\end{split}
\end{align}

We can now change the measure using the definition of $\pi^\star$:
\begin{equation}
\pi^{i + 1}_\mathcal{D} \gets \argmax_\pi \mathbb{E}_{\mathbf{s} \sim \mathcal{D}} \mathbb{E}_{\mathbf{a} \sim \pi^\star(\mathbf{\cdot | s})}  \log \pi(\mathbf{a} | \mathbf{s}) \mathrm{.}
\end{equation}

The inner expectation, up to a normalizing constant, is the negative cross-entropy between $\pi^\star$ and $\pi$. Since cross-entropy between two distributions is minimized when the distributions are equal, the optimum is reached at $\pi = \pi^\star_\mu|_\mathcal{D}$ for all $s \in \mathcal{D}$.
\end{proof}


As we can see, the QWR update rule reaches the desired target policy even under limited data. This stands in contrast to AWR, which requires repeating states in the replay buffer, as shown in Theorems~\ref{thm:awr_disc} and \ref{thm:awr_cont}.

\subsection{Multi-step targets}

\label{sec:multistep}

To make the training of the Q-value network more efficient, we implement an approach inspired by widely-used multi-step Q-learning \citep{mnih2016asynchronous}. We consider targets for the Q-value network computed over multiple different time horizons:

\begin{align}
\begin{split}
Q^\star_{\mu, t}(\mathbf{s}_i, \mathbf{a}_i) =& \left( \sum_{j = i}^{i + t - 1} \gamma^j \mathbf{r}_j \right) + \gamma^{i + t} \mathbb{E}_{\mathbf{a}'_1, ..., \mathbf{a}'_n \sim \mu(\cdot | \mathbf{s}_{i + t})} \\
&F(\{ Q_\mu(\mathbf{s}_{t + 1}, \mathbf{a}'_1), ..., Q_\mu(\mathbf{s}_{t + 1}, \mathbf{a}'_n) \})
\end{split}
\end{align}
where $\mathbf{s}_i$, $\mathbf{a}_i$, $\mathbf{r}_i$ are the states, actions and rewards in a collected trajectory, respectively. We aggregate those multi-step targets using a truncated TD($\lambda$) estimator \citep[p.~236]{sutton_barto}:
\begin{equation}
Q^\star_\mu(\mathbf{s, a}) = (1 - \lambda) \sum_{t = 1}^T \lambda^{t - 1} Q^\star_{\mu, t}(\mathbf{s, a})
\end{equation}

\end{document}